\newtheorem{theorem}{Theorem}
\newtheorem{lemma}[theorem]{Lemma}
\newtheorem{definition}[theorem]{Definition}
{\theorembodyfont{\rm} \newtheorem{eXample}[theorem]{Example}}
\newcommand{\QED}{\square}
\newcommand{\ENDEX}{$\blacksquare$}
\newenvironment{proof}{\noindent{\em Proof:}}{\hfill$\QED$\linebreak\smallskip}
\title{%
Learning Chordal Markov Networks by 
\\ Constraint Satisfaction} 
\author{Jukka Corander \\ University of Helsinki \\
Finland
\And
Tomi Janhunen\thanks{Also affiliated with the Helsinki Institute of Information Technology, Finland.} \\
Aalto University \\
Finland
\And
Jussi Rintanen\thanks{Also affiliated with Griffith University, Brisbane, Australia, and the Helsinki Institute of Information Technology, Finland. This work was funded by the Academy of Finland (Finnish Centre of Excellence in Computational Inference Research COIN, 251170).} \\
Aalto University \\
Finland
\And
Henrik Nyman\thanks{This work was funded by the Foundation of \AA bo Akademi University, as part of the grant for the Center of Excellence in Optimization and Systems Engineering.} \\
\AA bo Akademi University \\
Finland
\And
Johan Pensar$^\ddag$ \\ 
\AA bo Akademi University \\
Finland
}
\begin{document} 
 
\maketitle 
 
\begin{abstract} 
We investigate the problem of learning the structure of a Markov
network from data.
It is shown that the structure of such networks can be described in
terms of constraints which enables the use of existing solver
technology with optimization capabilities to compute optimal networks
starting from initial scores computed from the data.
To achieve efficient encodings, we develop a novel characterization of
Markov network structure using a balancing condition on the separators
between cliques forming the network. The resulting translations into
propositional satisfiability and its extensions such as maximum
satisfiability, satisfiability modulo theories, and answer set
programming, enable us to prove optimal certain network structures
which have been previously found by stochastic search.\footnote{This paper has been accepted for publication in the proceedings of the Neural Information Processing Systems conference NIPS'2013.}
\end{abstract}


\newcommand{\impl}{\rightarrow}
\newcommand{\disj}{\vee}
\newcommand{\conj}{\wedge}
\newcommand{\eqvi}{\leftrightarrow}
\newcommand{\setd}{\backslash}
\newcommand{\edges}[1]{\mbox{edges}(#1)}

\newcommand{\sys}[1]{\textsc{#1}}
\newcommand{\ds}[1]{\textbf{#1}}


\newcommand{\TODO}[1]{%
\vspace{1\baselineskip}\noindent\hrulefill\hspace{1em}{#1}%
\hspace{1em}\hrulefill\vspace{1\baselineskip}}
\newcommand{\DONE}{%
\vspace{1\baselineskip}\noindent\hrule\vspace{1\baselineskip}}
\newcommand{\el}[1]{\vspace{#1\baselineskip}}



\section{Introduction}
\label{section:introduction}

Graphical models (GMs) represent the backbone of the generic
statistical toolbox for encoding dependence structures in multivariate
distributions. Using Markov networks or Bayesian networks conditional
independencies between variables can be readily communicated and used
for various computational purposes. The development of the statistical
theory of GMs is largely set by the seminal works of
Darroch et al.~\cite{darroch80} and
Lauritzen and Wermuth \cite{lauritzen89}.
Although various approaches have been developed to generalize the
theory of graphical models to allow for modeling of more complex
dependence structures, Markov networks and Bayesian networks are still
widely used in applications ranging from genetic mapping of diseases
to machine learning and expert systems.

Bayesian learning of undirected GMs, also known as \emph{Markov random
  fields}, from databases has attained a considerable interest, both
in the statistical and computer science literature
\cite{corander03,corander08,dellaportas99,giudici03,giudici99,koivisto04,madigan94}.
The cardinality and complex topology of GM space pose difficulties
with respect to both the computational complexity of the learning
task and the reliability of reaching representative model structures.
Solutions to these problems have been proposed in earlier work.
Della Pietra et al. \cite{DellaPietraDPL97} present a greedy local search
algorithm Markov network learning and apply it to discovering word morphology.
Lee et al. \cite{LeeGK06} reduce the learning problem to a convex
optimization problem that is solved by gradient descent.
Related methods have been investigated later \cite{SchmidtNMM07,HoeflingTibshirani09}.


Certain types of stochastic search methods, such as Markov Chain Monte
Carlo (MCMC) or simulated annealing can be proven to be consistent with
respect to the identification of a structure maximizing posterior
probability \cite{corander08,dellaportas99,giudici03,giudici99}.
However, convergence of such methods towards the areas
associated with high posterior probabilities may still be slow when
the number of nodes increases \cite{corander08,giudici03}.
In addition, it is challenging to
guarantee that the identified model indeed truly represents the global
optimum since the consistency of MCMC estimates is by definition a limit result.
To the best of our knowledge, strict constraint-based search
methods have not been previously applied in learning of
Markov random fields. 
In this article, we formalize the structure of Markov networks using
constraints at a fairly general level. This enables the development of
reductions from the structure learning problem to
\emph{propositional satisfiability} (SAT) \cite{HandbookOfSAT2009}
and its generalizations such as
\emph{maximum satisfiability} (MAXSAT) \cite{LM09HBSAT},
and \emph{satisfiability modulo theories} (SMT) \cite{BSST09HBSAT},
as well as \emph{answer-set programming} (ASP) \cite{BET11:cacm};
and the deployment of respective solver technology for computations.
A main novelty is the recognition of maximum weight spanning trees of
the clique graph by a condition on the cardinalities of occurrences of
variables in cliques and separators, which we call the {\em balancing
condition}.

The article is structured as follows.
We first review some details of Markov networks and the respective
structure learning problem in Section \ref{section:structure-learning}.
To enable efficient encodings of Markov network learning as a constraint
satisfaction problem,
in Section \ref{section:fundamentals}
we establish a new characterization of the separators of a Markov network based on a \emph{balancing condition}.
In Section \ref{section:constraints}, we provide a high-level
description how the learning problem can be expressed using
constraints and sketch the actual translations into propositional
satisfiability (SAT) and its generalizations.
We have implemented these translations and conducted 
experiments to study the performance of existing solver technology
on structure learning problems in Section \ref{section:experiments}
using two widely used datasets \cite{Whittaker90}.
Finally, some conclusions and possibilities for further research in
this area are presented in Section \ref{section:conclusions}.


\section{Structure Learning for Markov Networks}
\label{section:structure-learning}

An undirected graph $G = (N,E)$ consists of a set of \textit{nodes}
$N$ which represents a set of random variables and a set of
\textit{undirected edges} $E\subseteq\{N \times N\}$. A \textit{path}
in a graph is a sequence of nodes such that every two consecutive nodes are
connected by an edge. Two sets of nodes $A$ and $B$ are said to be
\textit{separated} by a third set of nodes $D$ if every path between a
node in $A$ and a node in $B$ contains at least one node in $D$. An
undirected graph is \textit{chordal} if for all paths $v_0,
\ldots v_n$ with $n \geq 4$ and $v_0 = v_n$ there exists two nodes
$v_i$, $v_j$ in the path connected by an edge such that $j \neq i \pm
1$. A \textit{clique} in a graph is a set of nodes $c$ such that every
two nodes in it are connected by an edge. In addition, there
may not exist a set of nodes $c'$ such that $c \subset c'$ and every two
nodes in $c'$ are connected by an edge. Given the set of cliques $C$
in a chordal graph, the set of \textit{separators} $S$ can be obtained
through intersections of the cliques ordered in terms of a junction
tree \cite{Golumbic80}, this operation is considered thoroughly in
Section \ref{section:fundamentals}.

A Markov network is defined as a pair consisting of a graph $G$ and a
joint distribution $P_{N}$ over the variables in $N$. The graph
specifies the dependence structure of the variables and $P_{N}$
factorizes according to $G$ (see below). Given $G$ it is possible to
ascertain if two sets of variables $A$ and $B$ are {\em conditionally
independent} given another set of variables $D$, due to the global
Markov property
\[
A \perp \hspace{-0.17cm} \perp B \mid D,
\text{ if } D \text{ separates } A \text{ from }B.
\]
For a Markov network with a chordal graph $G$, the probability of a
joint outcome $x$ factorizes as
\[
P_N(x)=\frac{\prod_{c_i \in C } P_{c_i}(x_{c_i})}{\prod_{s_i \in S} P_{s_i} (x_{s_i}) }.
\]
Following this factorization the marginal likelihood of a dataset
$\textbf{X}$ given a Markov network with a chordal graph $G$ can be
written
\[
P(\textbf{X}|G) =
\frac{\prod_{c_i \in C } P_{c_i}(\textbf{X}_{c_i})}
     {\prod_{s_i \in S} P_{s_i} (\textbf{X}_{s_i}) }.
\]
By a suitable choice of prior distribution, the terms
$P_{c_i}(\textbf{X}_{c_i})$ and $P_{s_i} (\textbf{X}_{s_i})$ can be
calculated analytically. Let $a$ denote an arbitrary clique or
separator containing the variables $X_a$ whose outcome space has the
cardinality $k$. Further, let $n_{a}^{(j)}$ denote the number of
occurrences where $X_a = x_{a}^{(j)}$ in the dataset
$\mathbf{X}_{a}$. Now assign the Dirichlet $(\alpha_{a_{1}}
,\ldots,\alpha_{a_{k}})$ distribution as prior over the probabilities
$P_a(X_a = x_{a}^{(j)}) = \theta_j$, determining the distribution
$P_a(X_a)$. Given these settings $P_{a} ( \mathbf{X}_{a} )$ can be
calculated as
\[
P_{a}(\mathbf{X}_{a})=
\int_{\Theta}\prod_{j=1}^{k}(\theta_{j})^{n_{a}^{(j)}}\cdot\pi_{a}(\theta)d\theta
\]
where $\pi_{a}(\theta)$ is the density function of the Dirichlet prior
distribution. By the standard properties of the Dirichlet integral,
$P_{a} ( \mathbf{X}_{a} )$ can be reduced to the form
\[
P_{a}(\mathbf{X}_{a})=
\frac{\Gamma(\alpha)}{\Gamma(n_a+\alpha)}
\prod_{j=1}^{k}\frac{\Gamma(n_{a}^{(j)}+\alpha_{a_{j}})}{\Gamma(\alpha_{a_{j}})}
\]
where $\Gamma(\cdot)$ denotes the gamma function and
\[
\alpha = \sum_{j=1}^{k}\alpha_{a_{j}} \qquad \text{and}
\qquad n_a = \sum_{j=1}^{k}n_{a}^{(j)}.
\]
When dealing with the marginal likelihood of a dataset it is most
often necessary to use the logarithmic value $\log
P(\textbf{X}|G)$. Introducing the notations $v(c_i) = \log
P_{c_i}(\textbf{X}_{c_i})$ the logarithmic value of the marginal
likelihood can be written
\begin{equation}
\label{eq:log-likelihood}
\log P(\textbf{X}|G) =
\sum_{c_i \in C } \log P_{c_i}(\textbf{X}_{c_i}) -
\sum_{s_i \in S} \log P_{s_i} (\textbf{X}_{s_i}) =
\sum_{c_i \in C } v(c_i) - \sum_{s_i \in S} v(s_i).
\end{equation}
The learning problem is to find a graph structure $G$ that
optimizes the posterior distribution
\[
P(G|\textbf{X}) =
\frac{P(\textbf{X}|G) P(G)}{\sum_{G \in \mathcal{G}} P(\textbf{X}|G) P(G)}.
\] 
Here $\mathcal{G}$ denotes the set of all graph structures under
consideration and $P(G)$ is the prior probability assigned to $G$. In
the case where a uniform prior is used for the graph structures the
optimization problem reduces to finding the graph with the largest
marginal likelihood.


\section{Fundamental Properties and Characterization Results}
\label{section:fundamentals}

In this section, we point out some properties of chordal graphs and
clique graphs that can be utilized in the encodings of the learning
problem. In particular, we develop a characterization of maximum
weight spanning trees in terms of a \emph{balancing condition} on
separators.

The separators needed for determining the score
(\ref{eq:log-likelihood}) of a candidate Markov network are defined as
follows.  Given the cliques, we can form the \emph{clique graph}, in
which the nodes are the cliques and there is an edge between two nodes
if the corresponding cliques have a non-empty intersection.  We label
each of the edges with this intersection and consider the cardinality
of the label as its \emph{weight}.
The \emph{separators} are the edge labels of a
\emph{maximum weight spanning tree}
of the clique graph. Maximum weight spanning trees of arbitrary graphs
can be found in polynomial time by reducing the problem to finding
\emph{minimum weight spanning trees}.  This reduction consists of
negating all the edge weights and then using any of the polynomial
time algorithms for the latter problem \cite{GrahamHell85}.
There may be several maximum weight spanning trees, but they induce
exactly the same separators, and they only differ in terms of which
pairs of cliques induce the separators.

To restrict the search space we can observe that
a chordal graph with $n$ nodes has at
most $n$ maximal cliques \cite{Golumbic80}.
This gives an immediate upper bound on
the number of cliques chosen to build a Markov network, which can
be encoded as a simple cardinality constraint.
%
%

\subsection{Characterization of Maximum Weight Spanning Trees}
\label{section:balancing}

To simplify the encoding of maximum weight spanning trees (and
forests) of chordal clique graphs, we introduce the notion of
\emph{balanced spanning trees} (respectively, forests), and show that
these two concepts coincide when the underlying graph is chordal.
Then separators can be identified more effectively: rather than
encoding relatively complex algorithms for finding maximum-weight
spanning trees as constraints, it is sufficient to select a subset of
the edges of the clique graph that is acyclic and satisfies the
balancing condition expressible as a cardinality constraint over
occurrences of nodes in cliques and separators.

\begin{definition}[Balancing Condition]\label{def:balancing-condition}
A spanning tree (or forest) of a clique graph is \emph{balanced} if for
every node $n$, the number of cliques containing $n$ is one higher
than the number of labeled edges containing $n$.
\end{definition}

While in the following we state many results for spanning trees only,
they can be straightforwardly generalized to spanning forests as well
(in case the Markov networks are disconnected.)


\begin{lemma}\label{le:eqbalancing}
For any clique graph, all its balanced spanning trees have the same weight.
\end{lemma}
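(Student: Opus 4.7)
The plan is to express the weight of a balanced spanning tree as a sum that depends only on data intrinsic to the clique graph (namely, how many cliques each variable occurs in), and not on the tree itself. So the weight function is constant on the set of balanced spanning trees.

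First I would unfold the definition of weight: if $T$ is a spanning tree with edges $e_1,\ldots,e_m$ carrying labels $L_1,\ldots,L_m$ (the pairwise intersections of the incident cliques), then the weight of $T$ is $\sum_{i=1}^m |L_i|$. Swapping the order of summation over edges and elements gives
\[
\sum_{i=1}^m |L_i| \;=\; \sum_{i=1}^m \sum_{v \in L_i} 1 \;=\; \sum_{v \in N} |\{\,i : v \in L_i\,\}|,
\]
so the weight equals $\sum_{v} E_T(v)$, where $E_T(v)$ denotes the number of labeled edges of $T$ whose label contains $v$.

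Next I would apply the balancing condition (Definition~\ref{def:balancing-condition}) to each $v$: since $T$ is balanced, $E_T(v) = C(v) - 1$, where $C(v)$ is the number of cliques containing $v$. Crucially, $C(v)$ is determined by the clique graph alone and is independent of the choice of $T$. Substituting gives
\[
\text{weight}(T) \;=\; \sum_{v \in N} \bigl(C(v) - 1\bigr),
\]
which does not depend on $T$. Therefore any two balanced spanning trees of the same clique graph have equal weight.

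There is no serious obstacle here; the argument is a double-counting identity combined with a direct application of the balancing condition. The only care needed is to keep straight the two levels of graph (variables $v \in N$ versus cliques as nodes of the clique graph), and to note that an edge of $T$ contributes $|L_i|$ to the weight but is counted once per variable in its label when the sum is rearranged.
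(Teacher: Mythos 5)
Your proof is correct and is essentially the paper's own argument, just written out in more detail: the paper likewise observes that the weight equals the sum over nodes of their occurrences in edge labels, and that the balancing condition pins each such count to the (tree-independent) number of cliques containing that node, minus one. The explicit double-counting step you include is a helpful expansion of what the paper states in one sentence.
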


\begin{proof}
This holds in general because the balancing condition requires
exactly the same number of occurrences of any node in the separator
edges for any balanced spanning tree, and the weight is defined as
the sum of the occurrences of nodes in the edge labels.
\end{proof}

\begin{lemma}[\cite{Shibata88,JensenJensen94}]\label{le:maximumspanningjunction}
Any maximum weight spanning tree of the clique graph is a junction tree,
and hence satisfies the {\em running intersection property}:
for every pair of nodes $c$ and $c'$,
$(c\cap c')\subseteq c''$ for all nodes $c''$ on the unique path between
$c$ and $c'$.
\end{lemma}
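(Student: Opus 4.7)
The plan is to prove the lemma via a counting argument that ties the weight of a spanning tree to the \emph{subtree property}: for each variable $v$, the cliques containing $v$ induce a connected subgraph (a subtree) of $T$. It is standard that the subtree property is equivalent to the running intersection property, so it suffices to show that every maximum weight spanning tree has the subtree property.

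First, I would set up the key double-counting identity. For any spanning tree $T$ of the clique graph, an edge $e = (c, c')$ contributes $|c \cap c'|$ to the weight, and by swapping the order of summation,
\[
\text{weight}(T) = \sum_{e \in T} |\text{label}(e)|
= \sum_{v \in N} |\{e \in T : v \in \text{label}(e)\}|.
\]
For each $v$, the edges of $T$ whose label contains $v$ are exactly the edges of the subgraph of $T$ induced by the cliques containing $v$. As a subgraph of a tree, this induced subgraph is a forest on $k_v := |\{c \in C : v \in c\}|$ vertices, hence contains at most $k_v - 1$ edges, with equality precisely when it is connected. Summing over $v$ gives the uniform upper bound
\[
\text{weight}(T) \le \sum_{v \in N} (k_v - 1),
\]
valid for every spanning tree $T$, with equality iff $T$ satisfies the subtree property.

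Second, I would invoke the classical result (already cited as \cite{Golumbic80}) that a chordal graph admits a junction tree of its clique graph, i.e., a spanning tree satisfying the subtree (equivalently, running intersection) property. This witness shows the upper bound is attained, so the maximum weight is exactly $\sum_{v \in N} (k_v - 1)$. Combined with the previous paragraph, a spanning tree achieves the maximum weight if and only if it has the subtree property, i.e., is a junction tree. In particular, every maximum weight spanning tree satisfies the running intersection property: if $v \in c \cap c'$, then by the subtree property $v$ lies in every clique on the unique tree path between $c$ and $c'$, whence $(c \cap c') \subseteq c''$ for all intermediate $c''$.

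The main obstacle, I expect, is merely recognizing the clean counting identity above; once the per-node bound $k_v - 1$ is in hand the argument is essentially forced, and the equivalence between maximum weight, the balancing condition of Definition~\ref{def:balancing-condition}, and the junction tree property all become transparent --- notice that Lemma~\ref{le:eqbalancing} is an immediate byproduct, and indeed the proof above shows the stronger fact that balanced, maximum weight, and junction tree coincide for spanning trees of clique graphs of chordal graphs.
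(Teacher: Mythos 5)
Your proof is correct, but note that the paper does not actually prove this lemma --- it imports it from the literature (Shibata; Jensen and Jensen), so there is no in-paper argument to compare against step by step. What you have reconstructed is essentially the classical proof of that cited result: the double-counting identity $\mathrm{weight}(T)=\sum_{v}|\{e\in T: v\in\mathrm{label}(e)\}|$, the per-node forest bound $k_v-1$ with equality iff the cliques containing $v$ induce a connected subtree, and the existence of a junction tree for a chordal graph to show the bound is attained. All steps check out, including the equivalence of the subtree property with the running intersection property. Your closing observation is also accurate and worth emphasizing: since the balancing condition of Definition~\ref{def:balancing-condition} says precisely that each node $v$ contributes exactly $k_v-1$ edge-label occurrences, your argument shows in one stroke that \emph{balanced}, \emph{maximum weight}, and \emph{junction tree} coincide, which subsumes Lemma~\ref{le:eqbalancing}, Lemma~\ref{le:max2bala}, and Lemma~\ref{le:bala2max}. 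In particular it replaces the paper's proof of Lemma~\ref{le:max2bala} --- a four-case induction on rooted subtrees that itself leans on the present lemma --- with a shorter, self-contained counting argument that does not need the junction-tree property as an input, only the existence of some junction tree as a witness that the upper bound is tight. The one thing to make explicit if you wrote this up is the standing assumption that the chordal graph is connected (so that spanning trees, rather than forests, exist); the argument extends verbatim to spanning forests component by component, as the paper remarks after Definition~\ref{def:balancing-condition}.
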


\begin{lemma}\label{le:max2bala}
Let $T = \langle V,E_T\rangle$ be a maximum weight spanning tree of
the clique graph $\langle V,E\rangle$ of a connected chordal graph.
Then $T$ is balanced.
\end{lemma}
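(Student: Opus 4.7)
The plan is to exploit Lemma \ref{le:maximumspanningjunction}: since $T$ is a maximum weight spanning tree of the clique graph of a chordal graph, $T$ is a junction tree and satisfies the running intersection property. I will use this to count, for each node $n$ of the original graph, both the cliques containing $n$ and the separator edges of $T$ whose label contains $n$, and show these counts differ by exactly one.

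Fix any node $n$ of the original graph, and let $C_n \subseteq V$ be the set of cliques containing $n$. The first step is to argue that $C_n$ induces a connected subtree $T_n$ of $T$. For this, take two cliques $c, c' \in C_n$; since $n \in c \cap c'$, the running intersection property gives $n \in c''$ for every $c''$ on the unique $c$-to-$c'$ path in $T$, so that path lies entirely in $C_n$. Hence the induced subgraph $T_n$ is connected, and since it is a subgraph of the tree $T$, it is itself a tree on $|C_n|$ vertices with exactly $|C_n|-1$ edges.

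The second step is to identify the edges of $T_n$ with the separator edges of $T$ whose label contains $n$. An edge $(c,c') \in E_T$ has label $c \cap c'$, and $n$ belongs to this label if and only if $n \in c$ and $n \in c'$, i.e., if and only if both endpoints lie in $C_n$, i.e., if and only if the edge lies in $T_n$. Therefore the number of separator edges of $T$ containing $n$ equals the number of edges of $T_n$, which is $|C_n|-1$. This is precisely the balancing condition of Definition \ref{def:balancing-condition}, so $T$ is balanced.

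The only genuinely nontrivial ingredient is the subtree/connectedness step, and that is handed to us by Lemma \ref{le:maximumspanningjunction} together with a one-line application of the running intersection property; the rest is a bookkeeping count of vertices and edges in a tree. The proof extends without change to the disconnected case by applying the same argument to each tree of the spanning forest separately.
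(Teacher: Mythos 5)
Your proof is correct, and it takes a genuinely different route from the paper's. The paper proves the lemma by rooting $T$ at an arbitrary clique and inducting on the height of subtrees, with a four-way case analysis on how a fixed node $n$ is distributed among the subtrees hanging below a clique $c$; the running intersection property from Lemma \ref{le:maximumspanningjunction} is invoked locally at each inductive step to force $n$ into $c$ and into the relevant edge labels. You instead invoke the running intersection property once, globally, to establish the standard junction-tree fact that the cliques containing $n$ induce a connected subtree $T_n$ of $T$, after which the balancing condition is just the identity $|E(T_n)|=|V(T_n)|-1$ for trees, combined with the observation that the edges of $T_n$ are exactly the separator edges whose label contains $n$. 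Your argument is shorter, avoids the case analysis entirely, and makes the extension to spanning forests immediate (apply the count per component); the one point worth making explicit is that $C_n\neq\emptyset$ because every node of the original graph lies in some maximal clique, so the count $|C_n|-1$ is well defined and nonnegative. The paper's induction buys nothing extra here beyond perhaps mirroring a recursive verification one could encode as constraints; as a proof of the lemma, your version is the cleaner of the two.
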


\begin{proof}
We order the tree by choosing an arbitrary clique as the root and
by assigning a depth to all nodes according to their distance from the
root node.  The rest of the proof proceeds by induction on the
height of subtrees starting from the leaf nodes as the base case.
The induction hypothesis says that all subtrees satisfy the balancing condition.
The base cases are trivial: each leaf node (clique) trivially
satisfies the balancing condition, as there are no separators to
consider.

In the inductive cases, we have a clique $c$ at depth $d$, connected
to one or more subtrees rooted at neighboring cliques  $c_1,\ldots,c_k$
at depth $d+1$, with the subtrees satisfying the balancing condition.
We show that the tree consisting of the clique $c$, the labeled edges
connecting $c$ respectively to cliques $c_1,\ldots,c_k$, and the
subtrees rooted at $c_1,\ldots,c_k$, satisfies the balancing condition.

First note that by Lemma \ref{le:maximumspanningjunction}, any maximum
weight spanning tree of the clique graph is a junction tree and hence
satisfies the running intersection property, meaning that for any two
cliques $c_1$ and $c_2$ in the tree, every clique
on the unique path connecting them includes
$c_1\cap c_2$.

We have to show that the subtree rooted at $c$ is balanced, given that
its subtrees are balanced. We show that the balancing condition is
satisfied for each node separately. So let $n$ be one of the nodes in
the original graph.  Now each of the subtrees rooted at some $c_i$ has
either 0 occurrences of $n$, or $k_i\leq 1$ occurrences in the cliques
and $k_i-1$ occurrences in the edge labels, because by the induction
hypothesis the balancing condition is satisfied.
In total, four cases arise:
\begin{enumerate}
\item
The node $n$ does not occur in any of the subtrees.

Now the balancing condition is trivially satisfied for the subtree rooted
at $c$, because $n$ either does not occur in $c$, or it occurs in $c$
but does not occur in the label of any of the edges to the subtrees.

\item
The node $n$ occurs in more than one subtree.

Since any maximum weight spanning tree is a junction tree by Lemma
\ref{le:maximumspanningjunction}, $n$ must occur also in $c$
and in the labels of the edges between $c$ and the cliques in which the
subtrees with $n$ are rooted.
Let $s_1,\ldots,s_j$ be the numbers of occurrences of $n$ in the edge
labels in the subtrees with at least one occurrence of $n$, and
$t_1,\ldots,t_j$ the numbers of occurrences of $n$ in the cliques in
the same subtrees.

By the induction hypothesis, these subtrees are balanced, and hence
$t_i-s_i=1$ for all $i\in\{1,\ldots,j\}$.
The subtree rooted at $c$ now has $1+\sum_{i=1}^kt_i$ occurrences of
$n$ in the nodes (once in $c$ itself and then the subtrees) and
$j+\sum_{i=1}^js_i$ occurrences in the edge labels, where the $j$
occurrences are in the edges between $c$ and the $j$ subtrees.

We establish the balancing condition through a sequence of equalities.
The first and the last expression are the two sides of the condition.
\[
\begin{array}{cll}
\multicolumn{2}{l}{(1+\sum_{i=1}^jt_i)-(j+\sum_{i=1}^ks_i)} \\
& = 1-j+\sum_{i=1}^j(t_i-s_i) & \mbox{reordering the terms} \\
& = 1-j+j & \mbox{since } t_i-s_i=1 \mbox{ for every subtree} \\
& = 1 \\
\end{array}
\]
Hence also the subtree rooted at $c$ is balanced.

\item
The node $n$ occurs in one subtree and in $c$.

Let $i$ be the index of the subtree in which $n$ occurs.  Since any
maximum weight spanning tree is a junction tree by Lemma
\ref{le:maximumspanningjunction}, $n$ must occur also in the clique
$c_i$.  Hence $n$ occurs in the label of the edge from $c_i$ to $c$.
Since the subtree is balanced, the new graph obtained by adding the
clique $c$ and the edge with a label containing $n$ is also
balanced. Further, adding all the other subtrees that do not contain
$n$ will not affect the balancing of $n$.

\item
The node $n$ occurs in one subtree but not in $c$.

Since there are $n$ occurrences of $n$ in any of the other subtrees,
in $c$, or in the edge labels between $c$ and any of the subtrees,
the balancing condition holds.
\end{enumerate}
This completes the induction step and consequently,
the whole spanning tree is balanced.
\end{proof}

\begin{lemma}\label{le:bala2max}
Assume $T = \langle V,E_B\rangle$ is a spanning tree of
the clique graph $G_C=\langle V,E\rangle$ of a chordal graph
that satisfies the balancing condition.
Then $T$ is a maximum weight spanning tree of $G_C$.
\end{lemma}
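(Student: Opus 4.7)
The plan is to combine the two preceding lemmas to obtain the result essentially for free, without revisiting the inductive combinatorial argument used in Lemma~\ref{le:max2bala}. The key observation is that we already know two facts: first, by Lemma~\ref{le:eqbalancing}, any two balanced spanning trees of a given clique graph have identical weight; second, by Lemma~\ref{le:max2bala}, every maximum weight spanning tree of a chordal clique graph is itself balanced. Together these pin down a single common weight value shared by all balanced spanning trees and by all maximum weight spanning trees.

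Concretely, I would proceed as follows. Since $G_C$ is a finite connected graph (the chordality assumption on the underlying graph ensures the clique graph has a spanning tree), there exists at least one maximum weight spanning tree $T^\ast$ of $G_C$. By Lemma~\ref{le:max2bala}, $T^\ast$ satisfies the balancing condition. By hypothesis, $T$ also satisfies the balancing condition. Applying Lemma~\ref{le:eqbalancing} to the two balanced spanning trees $T$ and $T^\ast$, they have the same total weight. Hence the weight of $T$ equals the maximum possible weight among spanning trees of $G_C$, so $T$ is itself a maximum weight spanning tree.

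There is essentially no obstacle here, as the heavy lifting has already been done in Lemma~\ref{le:max2bala}. The only minor point to be careful about is the disconnected case, where the statement should be read for spanning forests: one would argue component by component, noting that the balancing condition and Lemmas~\ref{le:eqbalancing} and~\ref{le:max2bala} generalize to forests as the authors have already indicated. A further sanity check worth stating explicitly is that the chordality hypothesis is used only through the invocation of Lemma~\ref{le:max2bala}, so no extra structural reasoning about $G_C$ is required in this proof.
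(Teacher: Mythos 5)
Your proposal is correct and is essentially identical to the paper's own proof: both take an arbitrary maximum weight spanning tree, apply Lemma~\ref{le:max2bala} to conclude it is balanced, and then apply Lemma~\ref{le:eqbalancing} to conclude that $T$ has the same (maximum) weight. The extra remarks about the disconnected case and the role of chordality are harmless additions but do not change the argument.
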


\begin{proof}
Let $T_M$ be one of the spanning trees of $G_C$ with the maximum weight $w$.
By Lemma \ref{le:max2bala}, this maximum weight spanning tree is balanced.
By Lemma \ref{le:eqbalancing}, $T$ has the same weight $w$
as $T_M$. Hence also $T$ is a maximum weight spanning tree of $G_C$.
\end{proof}

Lemmas \ref{le:max2bala} and  \ref{le:bala2max} directly yield the
following.

\begin{theorem}
For any clique graph of a chordal graph, any of its subgraphs is a maximum
weight spanning tree if and only if it is a balanced acyclic subgraph.
\end{theorem}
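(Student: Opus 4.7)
The plan is to prove the biconditional by invoking Lemmas \ref{le:max2bala} and \ref{le:bala2max} in opposite directions, treating the theorem as essentially a corollary that consolidates the two implications.

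For the forward implication, I would start with an arbitrary subgraph $T$ of the clique graph that is a maximum weight spanning tree. Since $T$ is a tree, it is acyclic by definition. Lemma \ref{le:max2bala} then immediately gives that $T$ is balanced, so $T$ is a balanced acyclic subgraph.

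For the backward implication, I would let $T$ be any subgraph of the clique graph that is acyclic and satisfies the balancing condition of Definition \ref{def:balancing-condition}. Since that definition applies to a spanning tree (or spanning forest) of the clique graph, $T$ touches every vertex of the clique graph and, being acyclic, is a spanning tree (or, in the disconnected case, a spanning forest, to which the lemmas generalize as noted in the paper). Lemma \ref{le:bala2max} then directly gives that $T$ is a maximum weight spanning tree.

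The hardest step is really just the interpretative one in the backward direction: making sure that ``balanced acyclic subgraph'' is understood as a spanning structure, which is implicit in the paper's definition of balancing. Once that is noted, no further calculation is required and the theorem follows by a two-line argument chaining the two lemmas.
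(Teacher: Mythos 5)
Your proposal is correct and matches the paper exactly: the paper itself states that Lemmas \ref{le:max2bala} and \ref{le:bala2max} ``directly yield'' the theorem, which is precisely the two-implication chaining you describe. Your extra remark that a balanced acyclic subgraph must be read as a spanning structure is a reasonable clarification of what the paper leaves implicit.
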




\section{Constraints and Their Translations into
MAXSAT, SMT, and ASP}
\label{section:constraints}

The objectives of this section are twofold. First, we show how the
structure learning problem of Markov networks is cast in
an abstract constraint satisfaction problem.
Secondly, we partly formalize the constraints involved in the language
of propositional logic. This is the language directly supported by
SMT solvers and straightforward to transform into conjunctive normal form
used by SAT and MAXSAT solvers.  In ASP, however, slightly different
rule-based formulations are used but we omit corresponding ASP rules
for space reasons.

The learning problem is formalized as follows. The goal is to find a
\emph{balanced} spanning tree
(cf.~Definition \ref{def:balancing-condition})
for a set $C$ of cliques forming a Markov network and
the set $S$ of separators induced by the tree structure.
In addition, $C$ and $S$ are supposed to be optimal
in the sense of (\ref{eq:log-likelihood}), i.e.,
the overall \emph{score}
$v(C,S)=\sum_{c\in C}v(c)-\sum_{s\in S}v(s)$
is maximized. The individual score $v(c)$ for any set of nodes $c$
describes how well the mutual dependence of the variables in $c$
reflected by the data.

\begin{definition}\label{def:slb-as-csp}
Let $N$ be a set of nodes representing random variables and
$v:{\mathbf{2}}^N\rightarrow\mathbb{R}$
a scoring function. A \emph{solution} to the Markov network
learning problem is a set of \emph{cliques} $C=\{c_1,\ldots,c_n\}$ 
satisfying the following requirements viewed as abstract constraints:

\begin{enumerate}
\item\label{item:coverage}
Every node is included in at least one of the chosen cliques in $C$, i.e.,
$\bigcup_{i=1}^nc_i = N$.

\item\label{item:maximal}
Cliques in $C$ are maximal, i.e.,
\begin{enumerate}
\item for every $c,c'\in C$, if $c\subseteq c'$, then $c=c'$; and
\item for every $c\subseteq N$,
      if $\edges{c}\subseteq\bigcup_{c'\in C}\edges{c'}$,
      then $c\subseteq c'$ for some $c'\in C$
\end{enumerate}
where $\edges{c}=\{\{n,n'\}\subseteq c \mid n\neq n'\}$
is defined for each $c\subseteq N$.

\item\label{item:chordal}
The graph $\langle N,E\rangle$ with the set of edges
$E=\bigcup_{c\in C}\edges{c}$ is chordal.

\item\label{item:balanced-spanning-tree}
The set $C$ has a balanced spanning tree
labeled by a set of \emph{separators} $S=\{s_1,\ldots,s_m\}$.
\end{enumerate}
Moreover, the solution is \emph{optimal} if it maximizes
the overall score $v(C,S)$.
\end{definition}

In what follows, the encodings of basic graph properties
(Items \ref{item:coverage} and \ref{item:maximal} above) are
worked out in Section \ref{section:graph-properties}.
The more complex properties (Items \ref{item:chordal} and
\ref{item:balanced-spanning-tree}) are addressed in Sections
\ref{section:chordality} and \ref{section:separators}.

\subsection{Graph Properties}
\label{section:graph-properties}

We assume that clique candidates which are the non-empty subsets of
$V$ are indexed from 1 to $2^{|V|}$ and, from time to time, we
identify a clique with its index. Moreover, each clique candidate
$c\subseteq V$ has a score $v(c)$ associated with it.
To encode the search space for Markov networks, we introduce,
for every clique candidate $c$, a propositional variable $x_c$ 
denoting that $c$ is part of the learned network.
For every node $n$, we have the constraint
\begin{equation}
\label{eq:node-coverage}
x_{c_1}\disj\cdots\disj x_{c_m}
\end{equation}
where $c_1,\ldots,c_m$ are all cliques $c$ with $n\in c$. This clause
formalizes
Item \ref{item:coverage} of Definition~\ref{def:slb-as-csp}.
For Item \ref{item:maximal} (a) and each pair of clique candidates $c$
and $c'$ such that $c\subset c'$, we need
\begin{equation}
\label{eq:anti-chain}
\neg x_{c}\disj\neg x_{c'}
\end{equation}
for the mutual exclusion of $c$ and $c'$ in the network.
The second part (b) of Item \ref{item:maximal} means that
any implicit clique structure $c$ created by the edges of chosen cliques
must be covered
by some proper superset, i.e., a chosen clique $c'$, and otherwise
the clique $c$ must be chosen itself ($c'=c$).
%
To formalize this constraint, we introduce further propositional
variables $e_{n,m}$ that represent edges $\{n,m\}$ that are
in at least one chosen clique, and we require that if a clique's edges
are chosen, then none of its proper subset cliques can be chosen.
\footnote{%
As the edges are undirected, we limit to $e_{n,m}$ such that the
ordering of $n$ and $m$ according to some fixed ordering is
increasing, i.e., $n<m$. Under this assumption, $e_{m,n}$ for $n<m$
denotes $e_{n,m}$.}
Hence, for any $\{n,m\}\subseteq N$ such that $n<m$, let
$c_1,\ldots,c_k$ be all cliques such that $\{n,m\}\subseteq c_i$.
Then we introduce the constraint
\begin{equation}
\label{eq:edges}
e_{n,m}\eqvi(x_{c_1}\disj\cdots\disj x_{c_k})
\end{equation}
to make the edges of the chosen cliques explicit.  Furthermore, for
every clique candidate $c = \{n_1,\ldots,n_k\}$ and every node
$n\in V\setd c$ we need a constraint
\begin{equation}
\label{eq:implicit}
x_{c}\impl(\neg e_{n_1,n}\disj\cdots\disj\neg e_{n_k,n})
\end{equation}
where $e_{n_1,n},\ldots,e_{n_k,n}$ represent all additional edges that
would turn $c\cup\{n\}$ into a clique.
These constraints guarantee that the maximality of chosen cliques in
the sense of Item \ref{item:maximal} in Definition \ref{def:slb-as-csp}.

\subsection{Chordality}
\label{section:chordality}

We use a straightforward encoding of the chordality condition. The
idea is to generate constraints corresponding to every $k\geq 4$
element subset $S=\{n_1,\ldots,n_k\}$ of $N$.
Let us consider all cycles these nodes could form in the graph
$\langle N,E\rangle$ of Item \ref{item:chordal} in Definition
\ref{def:slb-as-csp}.  A cycle starts from a given node, goes through
all other nodes in some order, with (undirected) edges between two
consecutive nodes, and ends in the starting node. The number
of constraints required can be reduced by two observations.
First, the same cycle could be generated from different starting
nodes, e.g., cycles $n_1,n_2,n_3,n_4,n_1$ and $n_2,n_3,n_4,n_1,n_2$
are the same.
Second, generating the same cycle in two opposite directions, as in
$n_1,n_2,n_3,n_4,n_1$ and $n_1,n_4,n_3,n_2,n_1$, is clearly unnecessary.
To avoid redundant cycle constraints, we arbitrarily fix the starting
node, and additionally require that the index of the second node in
the cycle is lower than the index of the second last node. These
restrictions guarantee that every cycle associated with $S$ is
considered exactly once.
Now, the chordality constraint says that if there is an edge between
every pair of consecutive nodes in $n_1,\ldots,n_k,n_1$, then there
also has to be an edge between at least one pair of two
non-consecutive nodes. In the case $k=4$, for instance, this leads to
formulas of the form
\begin{equation}
\label{eq:chordal-formula}
e_{n_1,n_2}\conj e_{n_2,n_3}\conj e_{n_3,n_4}\conj e_{n_4,n_1}
\impl e_{n_1,n_3}\disj e_{n_2,n_4}.
\end{equation}
It is clear that the encoding of chordality constraints is exponential
in $|N|$ and therefore not scalable to very large numbers of
nodes. However, the datasets considered in Section
\ref{section:experiments} have only 6 or 8 variables, and in these cases
the exponentiality is not a problem. It is also possible to further
condense the encoding using \emph{cardinality constraints} available
in some constraint languages such as ASP.

\subsection{Separators}
\label{section:separators}

Separators for pairs $c$ and $c'$ of clique candidates can be
formalized as propositional variables $s_{c,c'}$, meaning
that $c\cap c'$ is a separator and there is an edge in
the spanning tree between $c$ and $c'$ labeled by $c\cap c'$.
The corresponding constraint is
\begin{equation}
\label{eq:spanning}
s_{c,c'}\impl x_c\conj x_{c'}.
\end{equation}
The lack of the converse implication formalizes the \emph{choice} of
the spanning tree, i.e., $s_{c,c'}$ can be false even if $x_c$ and
$x_{c'}$ are true. The remaining constraints on separators
fall into two cases.

First, we have cardinality constraints encoding the balancing
condition (cf. Section \ref{section:balancing}): each variable occurs
in the chosen cliques one more time than it occurs in the separators
labeling the spanning tree. As stated above, cardinality constraints
are natively supported by some constraint languages or, alternatively,
they can be efficiently reduced to disjunctive Boolean constraints
\cite{Sinz05}. For space reasons, we do not present the respective
propositional formulas here.
Second, the separators are not allowed to form a cycle. This property
is not guaranteed by the balancing condition alone so that a separate
encoding of the acyclicity of the graph formed by the cliques with
separators as the edges is needed. Our encoding of the acyclicity condition is
based on an inductive definition of tree structure: we repeatedly
remove \emph{leaf nodes}, i.e., nodes with at most one neighbor, until
all nodes have been removed. When applying this definition
to a graph with a cycle, some nodes will remain in the end.
To encode this, we define the {\em leaf level} for each node in a graph.
A node is a {\em level 0 leaf} iff
it has 0 or 1 neighbors in the graph.
A node is a {\em level $n+1$ leaf} iff
all its neighbors except possibly one are level $j\leq n$ leaves.
This definition is directly expressible by Boolean constraints.
Then, a graph with $m$ nodes is acyclic iff all its nodes
are level $\lfloor \frac{m}{2} \rfloor$ leaves. We use this acyclicity
test for the chosen cliques and separators.


\section{Experimental Evaluation}
\label{section:experiments}

The constraints described in Section \ref{section:constraints} can be
alternatively expressed as MAXSAT, SMT, or ASP problems. In what
follows, we exploit such encodings and respective back-end solvers in
order to compute globally optimal Markov networks for datasets
from the literature.
The test runs were with an Intel Xeon( E3-1230 CPU running at 3.20 GHz.
For both datasets, we computed the respective \emph{score file} that
specifies the score of each clique candidate, i.e., the log-value of
its potential function, and the list of variables involved in that
clique. The score files were then translated into respective encodings
and run on a variety of solvers.
\begin{enumerate}
\item
For the MAXSAT encodings, we tried out \sys{SAT4J} (version 2.3.2)
\cite{LeBerreParrain10} and \sys{PWBO} (version 2.2)
\cite{MartinsML12}. The latter was run in its default configuration
as well as in the UB configuration.

\item
For SMT, we used the \sys{OptiMathSAT} solver
(version 5) \cite{SebastianiTomasi12}.

\item
For ASP, we used the \sys{Clasp} (version 2.1.3)
\cite{GKS12:aij} and \sys{HClasp}%
\footnote{\url{http://www.cs.uni-potsdam.de/hclasp/}}
(also v.~2.1.3) solvers.
The latter allows declaratively specifying search heuristics.
We also tried the \sys{LP2NORMAL} tool that
reduces cardinality constraints to more basic constraints
\cite{JN11:mg65}.
\end{enumerate}
The MAXSAT and ASP solvers only support integer scores obtained
by multiplying the original scores by 1000 and rounding.
The SMT solver OptiMathSAT used the original floating point scores.

\begin{table}[t]
\begin{center}
\begin{tabular}{|l|rr||rr|}
\hline
                 & \ds{heart} &       \ds{econ} & \ds{heart} & \ds{econ} \\
\hline
\sys{OptiMathSAT}     &  $74$ &               - &    3930 kB &    139 MB \\
\sys{PWBO} (default)  & $158$ &               - &    3120 kB &    130 MB \\
\sys{PWBO} (UB)       &  $63$ &               - &    3120 kB &    130 MB \\
\sys{SAT4J}           &  $28$ &               - &    3120 kB &    130 MB \\
\sys{LP2NORMAL+Clasp} & $111$ &               - &    8120 kB &   1060 MB \\
\sys{Clasp}           & $5.6$ &               - &     197 kB &    4.2 MB \\
\sys{HClasp}          & $1.6$ & $310\times10^3$ &     203 kB &    4.2 MB \\
\hline
\end{tabular}
\end{center}
\caption{\label{table:results}%
Summary of results: Runtimes in seconds and
sizes of solver input files}
\end{table}

To illustrate the potential residing in solver technology, we consider
two datasets, one containing risk factors in heart diseases and the
other variables related to economical behavior \cite{Whittaker90},
to be abbreviated by
\ds{heart} and \ds{econ} in the sequel. For \ds{heart}, the globally
optimal network has been verified via (expensive) exhaustive enumeration.
For \ds{econ}, however, exhaustive enumeration is impractical due to the
extremely large search space, and consequently the optimality of
the Markov network found by stochastic search in \cite{corander08} had been
open until now.
The results have been collected in Table \ref{table:results}.
%

The \ds{heart} data involves $6$ variables giving rise to $2^6=64$
clique candidates in total and a search space of
$2^{15}$ undirected networks of which a subset are decomposable. 
For instance, the ASP solver \sys{HClasp} used in our experiments
traversed a considerably smaller search space that consisted of $26651$
(partial) networks.  This illustrates the power of
branch-and-bound type algorithms behind the solvers under
consideration and their ability to cut down the search space
effectively. On the other hand, the \ds{econ} dataset is based on $8$
variables giving rise to a much larger search space $2^{28}$.
We were able to solve this instance optimally with only one solver,
\sys{HClasp}, which allows for a more refined control of the
search heuristic. To this end, we used a quite simple scheme where
cliques are tried in a size ordering, the greatest cliques first. With
this arrangement the global optimum is found in roughly $14$ hours
after which $3$ days is spent on the proof of optimality.


\section{Conclusions}
\label{section:conclusions}
 
Boolean constraint methods appear not to have been earlier applied to
learning of undirected Markov networks.  In this article we introduced
a generic approach in which the learning problem is expressed in terms
of constraints on variables that determine the structure of the
learned network.  The related problem of structure learning of
Bayesian networks has been addressed by general-purpose combinatorial
search methods, including MAXSAT \cite{Cussens08} and a
constraint-programming solver with a linear-programming solver as a
subprocedure \cite{Cussens11,BarlettCussens13}.  We introduced
explicit translations of the generic constrains to the languages of
MAXSAT, SMT and ASP, and demonstrated their use through existing
solver technology. Our method thus opens up a novel venue of research
to further develop and optimize the use such technology for network
learning. A wide variety of possibilities does exist also for using
these methods in combination with stochastic or heuristic search.

\bibliographystyle{unsrt}
\bibliography{networklearning,library,other}

\end{document}